\newtheorem{theorem}{Theorem}
\newtheorem{definition}{Definition}
\definecolor{GoogleRed}{RGB}{244, 66, 60}
\definecolor{GoogleGreen}{RGB}{10, 168, 88}
\definecolor{GoogleBlue}{RGB}{45, 133, 240}
\definecolor{GoogleYellow}{RGB}{255, 187, 50}
\definecolor{Gray}{RGB}{128,128,128}
\begin{document}

\title{PAGE: Equilibrate Personalization and Generalization in Federated Learning}


\author {
	Qian Chen\textsuperscript{\rm 1},
	Zilong Wang\textsuperscript{\rm 1\;*},
	Jiaqi Hu\textsuperscript{\rm 1},
	Haonan Yan\textsuperscript{\rm 2},
	Jianying Zhou\textsuperscript{\rm 3},
	Xiaodong Lin\textsuperscript{\rm 4}    
}
\affiliation {
	\textsuperscript{\rm 1}
	\institution{State Key Laboratory of Integrated Service Networks, Xidian University, Xi'an, China}\city{}\country{}
}
\affiliation {
	\textsuperscript{\rm 2}
	\institution{Zhejiang Key Laboratory of Multi-dimensional Perception Technology Application and Cybersecurity, Zhejiang, China}\city{}\country{}
}
\affiliation {
	\textsuperscript{\rm 3}
	\institution{iTrust, Singapore University of Technology and Design, Singapore, Singapore}\city{}\country{}
}
\affiliation {
	\textsuperscript{\rm 4}
	\institution{School of Computer Science, University of Guelph, Guelph, Canada}\city{}\country{}
}
\email{
qchen\_4@stu.xidian.edu.cn, zlwang@xidian.edu.cn, {ivyhjq7, yanhaonan.sec}@gmail.com, jianying\_zhou@sutd.edu.sg,
}
\email
{ xlin08@uoguelph.ca
}
\thanks{\textsuperscript{\rm *} Corresponding author.}

%
%
%
%
%
%
%
%

\renewcommand{\shortauthors}{Chen et al.}

\begin{abstract}

Federated learning (FL) is becoming a major driving force behind machine learning as a service, where customers (clients) collaboratively benefit from shared local updates under the orchestration of the service provider (server). Representing clients' current demands and the server's future demand, local model personalization and global model generalization are separately investigated, as the ill-effects of {\it data heterogeneity} enforce the community to focus on one over the other. However, these two seemingly competing goals are of equal importance rather than black and white issues, and should be achieved simultaneously. In this paper, we propose the first algorithm to balance personalization and generalization on top of game theory, dubbed PAGE, which reshapes FL as a co-opetition game between clients and the server. To explore the equilibrium, PAGE further formulates the game as Markov decision processes, and leverages the reinforcement learning algorithm, which simplifies the solving complexity. Extensive experiments on four widespread datasets show that PAGE outperforms state-of-the-art FL baselines in terms of global and local prediction accuracy simultaneously, and the accuracy can be improved by up to 35.20\% and 39.91\%, respectively. In addition, biased variants of PAGE imply promising adaptiveness to demand shifts in practice.

\end{abstract}

\begin{CCSXML}
	<ccs2012>
	<concept>
	<concept_id>10010147.10010257</concept_id>
	<concept_desc>Computing methodologies~Machine learning</concept_desc>
	<concept_significance>500</concept_significance>
	</concept>
	<concept>
	<concept_id>10010147.10010919</concept_id>
	<concept_desc>Computing methodologies~Distributed computing methodologies</concept_desc>
	<concept_significance>500</concept_significance>
	</concept>
	<concept>
	<concept_id>10003120.10003138</concept_id>
	<concept_desc>Human-centered computing~Ubiquitous and mobile computing</concept_desc>
	<concept_significance>500</concept_significance>
	</concept>
	<concept>
	<concept_id>10010147.10010257.10010258.10010261</concept_id>
	<concept_desc>Computing methodologies~Reinforcement learning</concept_desc>
	<concept_significance>500</concept_significance>
	</concept>
	</ccs2012>
\end{CCSXML}

\ccsdesc[500]{Computing methodologies~Machine learning}
\ccsdesc[500]{Computing methodologies~Distributed computing methodologies}
\ccsdesc[500]{Human-centered computing~Ubiquitous and mobile computing}
\ccsdesc{Computing methodologies~Reinforcement learning}

\maketitle

\section{Introduction}
With the rapid proliferation of data constantly generated on pervasive mobile and Web-of-Things (WoT) devices, federated learning (FL) has emerged as a promising distributed machine learning (ML) paradigm that enables efficient data usage by unleashing the computation power on devices \cite{konevcny2016,Yang2023}. In typical FL (TFL) \cite{McMahan2017,Li2020FedProx,Acar2021FedDyn,Karimireddy2020SCAFFOLD,Chen2023Dap-FL}, represented by FedAvg \cite{McMahan2017}, a central server orchestrates a group of clients to train a single global model with desirable generalization by iteratively averaging local models rather than accessing raw data. Serving as a step towards high prediction accuracy and efficiency for ML-as-a-service (MLaaS), TFL is poised to revolutionize myriad WoT applications, such as the next word prediction on Google's Gboard on Android \cite{Bonawitz2019}, healthcare \cite{Liu2023}, and e-commerce \cite{Niu2020}, etc.

However, TFL suffers severely from the {\it data heterogeneity} \cite{Li2020} issue, which is a fundamental challenge attributed to non-independent identically distributed (Non-i.i.d.) local data. To be specific, the prediction accuracy of a single global model on individual clients is significantly reduced in the presence of heterogeneous local data distributions. For instance, clients from different demographics are likely to require totally different prediction results for the same sample due to diverse cultural nuances, while a single global model cannot generalize well in this case.

To overcome the ill-effects of {\it data heterogeneity}, personalized FL (PFL) has sparked increasing interest during the past few years, where customized local models are constructed for individual clients to provide satisfactory personalization \cite{Dinh2020pFedMe,Li2021Ditto,Singhal2021FedRECON,Zhang2023FedALA}. Currently, the research trend is to accommodate the generalized global model as personalized local models. In this case, global model generalization is inevitably sacrificed with the improvement of local model personalization \cite{Tan2022}. (An in-depth discussion of more related works is given in Section 2 and Appendix A.) But it is tempting to ask: Is the personalized local model in PFL, or perhaps the generalized global model in TFL, the most practical demand on earth? Although this pair of seemingly competing goals has enforced the FL community to focus on one over the other, it is never a black and white issue. Taking MLaaS as an example, customers require local models with desirable personalization, which is a current demand. On the contrary, generalized global models are pursued by the service provider to yield a better initialization to fine-tune local models for numerous new participants, which is referred to as the future demand. 

Recently, Chen {\it et al.} \cite{Chen2022FED-ROD} tried to draw attention back from personalization to their reconciliation, where the optimization priority between personalization and generalization was eliminated. Besides, the widely used regularizer was proven less effective and hence removed. Still, a definite insight into the equality between personalization and generalization was not claimed. To extend their insight, we specify that personalization and generalization share equal status in FL, and the balance between them is much needed. Back to the MLaaS scenario, balance refers to a moderate condition satisfying current and future demands simultaneously. Yet, an intuitive question springs to mind:
\begin{itemize}
	\item[]{\it How to achieve the balance between local model personalization and global model generalization in FL?}
\end{itemize}

In response, we propose a \underline{p}ersonalization \underline{a}nd \underline{g}eneralization \underline{e}quilibrium (PAGE) FL algorithm. Following the optimization problem in \cite{Chen2022FED-ROD}, we formulate FL as a joint evolution with mutual restraints between global and local models by removing the regularizer. Such an evolution is intractable, as the optimization of competing objectives would get out of control with the removal of the regularizer. Intuitively, the iterative evolution can be viewed as a co-opetition game, where the personas of clients and the server switch to players with leader-follower relations. As a result, to balance the competing objectives, PAGE establishes an implicit relation between local and global models through a feedback multi-stage multi-leader single-follower (MLSF) Stackelberg game \cite{GameTheory1998}. Additionally, to simplify the exploration of the game equilibrium, i,e., balance, PAGE further re-formulates the game as Markov decision processes (MDPs) \cite{Bellman1957}, and leverages the deep deterministic policy gradient (DDPG) \cite{Lillicrap2016} algorithm. 

The main {\bf contributions} are summarized as follows:

\begin{itemize}
	\item{To the best of our knowledge, PAGE is the first algorithm to balance generalization and personalization in FL. In particular, PAGE establishes the relation between personalization and generalization on top of game theory.}
	\item{We re-formulate the game as server-level and client-level MDPs, and explore the equilibrium by reinforcement learning (RL). Through rigorous analysis, the existence of the equilibrium is proved.}
	\item{We evaluate PAGE on four widespread databases. Experimental results show that PAGE outperforms the state-of-the-art (SOTA) PFL and TFL in terms of global and local prediction accuracy simultaneously, and the accuracy can be improved by up to 35.20\% and 39.91\%, respectively. Besides, biased variants of PAGE imply promising adaptiveness to varying demand shifts in practice.}
\end{itemize}

\section{Related Work}
Since the birth of FL, {\it data heterogeneity} has been a root cause of the tension between generalization and personalization. Accordingly, the research community has been divided into TFL \cite{McMahan2017,Li2020FedProx,Acar2021FedDyn,Karimireddy2020SCAFFOLD,Chen2023Dap-FL} and PTL \cite{Dinh2020pFedMe,Li2021Ditto,Singhal2021FedRECON,Zhang2023FedALA,Chen2022FED-ROD}, focusing on global model generalization and local model personalization, respectively. Below we discuss the SOTA baselines most relevant to PAGE, and a more comprehensive literature review can be found in Appendix A.

\textbf{Typical federated learning.} Solutions for {\it data heterogeneity} stemmed from FedAvg \cite{McMahan2017}, which was a standard and fundamental algorithm. Shortly, it was proven hard to meet Non-i.i.d. data \cite{Li2020}. Later on, to mitigate this issue, Li {\it et al.} \cite{Li2020FedProx} proposed FedProx to generalize FedAvg by adding a proximal term to the objective, which improved the stability facing heterogeneous data. Similarly, FedDyn investigated linear and quadratic penalty terms \cite{Acar2021FedDyn}. Different from these regularization methods, SCAFFOLD corrected local updates through variance reduction \cite{Karimireddy2020SCAFFOLD}. Recently, Chen {\it et al.} \cite{Chen2023Dap-FL} proposed Dap-FL to adaptively control local contributions for aggregation. Although the above TFL algorithms could yield expected global model generalization, the single global model setting struggled to satisfy customers' current demands in MLaaS, i.e., local model personalization.

\textbf{Personalized federated learning.} To overcome {\it data heterogeneity}, PFL has drawn significant research interest in training customized models adapting to diverse local data. For instance, pFedMe optimized a bi-level problem using regularized local loss functions with $ L_{2} $-norm, where personalized local models were decoupled from the global model optimization \cite{Dinh2020pFedMe}. Ditto conducted a similar regularization method, but differed by switching the priority between global and local objectives \cite{Li2021Ditto}. Besides, Singhal {\it et al.} leveraged model-agnostic meta-learning to fine-tune local models \cite{Singhal2021FedRECON}. Most recently, Zhang {\it et al.} \cite{Zhang2023FedALA} proposed FedALA, which adaptively aggregated the downloaded global model and local models towards local objectives at the element level. However, PFL fared less well in global model generalization, which cannot meet the future demand of service providers in practice. One closely relevant work was FED-ROD, where an implicit regularizer was introduced to consider generalization in the presence of personalization \cite{Chen2022FED-ROD}. Although FED-ROD decoupled local and global models, a definite insight into their equal statuses was absent.

To the best of our knowledge, no prior arts take the balance of local model personalization and global model generalization into account, while PAGE bridges this gap through game theory, thereby satisfying current and future demands simultaneously. More importantly, PAGE converts sub-games into MDPs, and derives the equilibrium by adaptively adjusting local training hyper-parameters and aggregation weights on top of RL.

\section{Problem Statement}
In this section, we first formalize TFL and PFL systems, then identify the problem to be solved in this paper\footnote{For clarity, we summarize important notations in Appendix B.}. Generally, FL involves $ N $ clients $ \mathbb{C}\!=\!\left\{c_{i},i\!=\!1,\!\cdots\!,N \right\} $ and a central server $\mathit{CS}$. Each $ c_{i} $ has a private local dataset $ D_{i}\!=\!\left\{\left(x_{i,k},y_{i,k}\right), k\!=\!1,\!\cdots\!,\vert D_{i}\vert\right\} $, where $ \vert D_{i}\vert $ is the data size, $ x_{i,k} $ is the feature of a specific sample, and $ y_{i,k} $ is the corresponding label. Also, $ \mathit{CS} $ owns a public dataset $ D_{\mathit{CS}} $. The goal of TFL and PFL is to collaboratively train global and local models, respectively. Supposing $ f_{i}(x_{i,k},y_{i,k};w_{i}) $ denotes $ c_{i} $'s local loss function (simply expressed as $ f_{i}(w_{i}) $), the global loss function is denoted by $ F(\cdot) $ and defined as:
\begin{equation}\label{eq-1}
	F(W)=\sum_{i=1}^{N}\left(p_{i}\!\cdot\!\mathbb{E}_{D_{i}}\left[f_{i}(x_{i,k},y_{i,k};w_{i})\right]\right)=\sum_{i=1}^{N}\left(p_{i}\!\cdot\! f_{i}(w_{i})\right) ,
\end{equation}
where $ w_{i} $ is $ c_{i} $'s local model, $ W $ is the global model, $ p_{i}\!\in\!(0,1) $ is the aggregation weight, and $ \sum_{i=1}^{N}p_{i}\!=\!1 $. 

Mathematically, TFL aims to train a single global model with promising generalization, shown as:
\begin{equation}\label{eq-2}
	W^{*}=\mathop{\arg\min}_{W} F\left(D_{1},\!\cdots\!,D_{N};W\right),
\end{equation}
where $ W^{*} $ is the converged global model. At the opposite end of the spectrum, to tackle data heterogeneity issues, PFL customizes local models with satisfactory personalization, formally given as:
\begin{equation}\label{eq-3}
	\left\{ \begin{aligned}
		&W^{*}=\mathop{\arg\min}\limits_{W}\left\{ F\left(W\right):=\sum\nolimits_{i=1}^{N}(p_{i}\!\cdot\!( f_{i}(w_{i})+\mathcal{R}_{i}))\right\}, \\ 
		&~\text{s.t.}~~~w_{i}^{*}=\mathop{\arg\min}\limits_{w_{i}}\left\{f_{i}(w_{i})+\mathcal{R}_{i}\right\}, i=1,\!\cdots\!,N,\\ 
	\end{aligned} \right.
\end{equation}
where $ w_{i}^{*} $ is the optimal local model, and the regularizer $ \mathcal{R}_{i} $ controls the strength of $ W $ to $ w_{i} $.

Different from TFL and PFL, we concentrate on balancing global model generalization and local model personalization, rather than facilitating any of them to a position of prominence. Following \cite{Chen2022FED-ROD}, we define an optimization problem: 
\begin{equation*}
	\textbf{P}_{\bm 0}:\left\{ \begin{aligned}
		&W^{*}=\mathop{\arg\min}\limits_{W}\left\{ F\left(W\right):=\sum\nolimits_{i=1}^{N}\left(p_{i}\!\cdot\! f_{i}(w_{i})\right)\right\}, \\ 
		&w_{i}^{*}=\mathop{\arg\min}\limits_{w_{i}}\left\{f_{i}(w_{i})\right\}, i=1,\!\cdots\!,N.\\ 
	\end{aligned} \right.
\end{equation*}
In this case, $ \mathit{CS} $ and $ c_{i} $ would conduct an iterative co-opetition, aiming at a joint evolution with mutual restraints between $ W $ and $ w_{i} $. Specifically, in any given round $ t\!=\!1,\!\cdots\!,T $, each $ c_{i} $ initializes the local model $ w_{i}(t) $ as the most recent global model $ W(t) $ received from $ \mathit{CS} $. Then, $ c_{i} $ updates $ w_{i}(t) $ for $ \alpha_{i}(t) $ epochs, expressed as:
\begin{equation}\label{eq-4}
	\hat{w}_{i}(t)=\mathit{Train}\left(\eta_{i}(t),\alpha_{i}(t);w_{i}(t)\right),
\end{equation}
where $ \hat{w}_{i}(t) $ is the updated local model, and $ \eta_{i}(t) $ is the learning rate. Subsequently, each $ c_{i} $ uploads $ \hat{w}_{i}(t) $ to $ \mathit{CS} $, and $ \mathit{CS} $ assigns $ p_{i}(t) $ for every $ \hat{w}_{i}(t) $ to update the global model by aggregation, shown as:
\begin{equation}\label{eq-5}
	W(t+1)=\sum\nolimits_{i=1}^{N}\left(p_{i}(t)\!\cdot\!\hat{w}_{i}(t)\right).
\end{equation}

\section{Proposed Method: PAGE}

\subsection{Game (Relation) Establishment}

To control the delicate balance in $ \textbf{P}_{\bm 0} $, it is essential to establish a more effective relation between $ W $ and $ w_{i} $. In general, the balance-controlling factors are equivalent to the counterparts impacting $ f_{i}(\cdot) $ and $ F(\cdot) $. Empirical results show that the most significant factors are $ \alpha_{i}(t) $, $ \eta_{i}(t) $, and $ p_{i}(t) $ \cite{Chen2023Dap-FL,Wu2021}. Concretely, a larger (smaller) $ \alpha_{i}(t) $ provides more (fewer) steps of the optimization of $ f_{i}(\cdot) $, thereby contributing more (lesser) to local model fitness over $ D_{i} $, i.e., local model personalization. $ \eta_{i}(t) $ wields the influence in a similar way. Besides, $ \alpha_{i}(t) $ and $ \eta_{i}(t) $ impact $ F(\cdot) $ in an indirect manner, where $ f_{i}(\cdot) $ plays a role in a bridge. Loosely speaking, over-optimized $ f_{i}(\cdot) $ derived from larger $ \alpha_{i}(t) $ and/or $ \eta_{i}(t) $ holds down the convergence of $ F(\cdot) $ to some extent, i.e., excessive local model personalization deteriorates global model generalization. Yet, appropriate $ p_{i}(t) $ could mitigate the bias of over-optimized $ f_{i}(\cdot) $ to facilitate the convergence of $ F(\cdot) $, which, in turn, drags $ f_{i}(\cdot) $ from overfitting. More critically, the influence of these balance-controlling factors on either personalization or generalization might even go beyond the apparently positive or negative correlation in practice, which exacerbates the complexity of the relation establishment.

From a game theory point of view, the iterative evolution between $ w_{i}(t) $ and $ W(t) $ subject to balance-controlling factors can be regarded as a multi-stage co-opetition game between clients and $ \mathit{CS} $ with leader-follower sequences, where leaders move ahead of the follower in each stage. On this ground, we re-formulate $ \textbf{P}_{\bm 0} $ as a feedback multi-stage MLSF Stackelberg game in Definition \ref{def-1}, based on which an implicit relation between $ W $ and $ w_{i} $ is established.
\begin{definition}\label{def-1}
	$ \textbf{\rm P}_{\bm 0} $ can be formulated as a feedback multi-stage MLSF Stackelberg game, defined as:
	\begin{equation*}
		\begin{aligned}
			\textbf{\rm P}^{\bm\prime}_{\bm 0}=\Big\llbracket\big\langle\! &\left\{c_{i}\right\}_{i=1}^{N}\in\mathbb{C},\!\mathit{CS}\big\rangle,\big\langle\!\left\{g_{i}(t)\in\mathcal{G}_{i}\right\}_{i=1}^{N}\!,g_{\mathit{CS}}(t)\in\mathcal{G}_\mathit{CS}\big\rangle,\\ 
			&\big\langle\!\left\{u_{i}(t)\right\}_{i=1}^{N}\!,u_\mathit{CS}(t) \big \rangle,z(t)\in\mathcal{Z},t=1,\!\cdots\!,T~\Big\rrbracket, \text{where}
		\end{aligned}
	\end{equation*}
	\begin{itemize}
		\item{$ c_{i},i=1,\!\cdots\!,N $ are leaders, and $ \mathit{CS} $ is the follower.}
		\item{$ t\!=\!1,\!\cdots\!,T $ represents the stage of the game. Note that the initial global model distribution is not involved in $ \textbf{\rm P}^{\bm\prime}_{\bm 0} $.}
		\item{$ g_{i}(t)\!=\!\left[\alpha_{i}(t),\eta_{i}(t)\right] $ is $ c_{i} $'s strategy in the $ t $-th stage, and $ \mathcal{G}_{i} $ is the strategy space.}
		\item{$ g_{\mathit{CS}}(t)\!=\!\left[p_{1}(t),\!\cdots\!,p_{N}(t)\right] $ is $ \mathit{CS} $'s reacting strategy to all $ g_{i}(t),i\!=\!1,\!\cdots\!,N $, and $ \mathcal{G}_{\mathit{CS}} $ is the strategy space.}
		\item{$ u_{i}(t)=1/f_{i}\left(\hat{w}_{i}(t)\right) $ is $ c_{i} $'s utility function.}
		\item{$ u_{\mathit{CS}}(t)=1/F\left(W(t\!+\!1)\right)=1/\sum_{i=1}^{N}(p_{i}(t)\cdot f_{i}(\hat{w}_{i}(t))) $ is $ \mathit{CS} $'s utility function.}
		\item{$ z(t) $ is the gaming condition, and $ \mathcal{Z} $ is the condition space.}
	\end{itemize}
\end{definition} 

Definition \ref{def-1} depicts dynamic conflict situations between clients and $ \mathit{CS} $ over time, in which each $ c_{i} $ operates $ g_{i}(t) $, and $\mathit{CS}$ optimizes $ g_{\mathit{CS}}(t) $ subject to the constraints of all clients' strategies in each stage. Also, clients are able to infer $ \mathit{CS} $’s reaction to any strategies they operate. Therefore, each $ c_{i} $ could operate a strategy that maximizes the utility, given the predicted behavior of $ \mathit{CS} $. 

Notably, the equilibrium of the game $ \textbf{P}^{\bm\prime}_{\bm 0} $ provides a terminating condition for the pursuing balance in $ \textbf{P}_{\bm 0} $, whose existence is confirmed at the end of this section (Theorem \ref{thm-1}). Next, in line with the general equilibrium solving method in Stackelberg games \cite{GameTheory1998}, we split $ \textbf{P}^{\bm\prime}_{\bm 0} $ as the Server-level and Client-level sub-games to explore the appropriate strategy sequences in the equilibrium separately.

\subsection{Strategy Exploration in the Server-level Sub-game}

For the server-level sub-game, the equilibrium of $ \textbf{P}^{\bm\prime}_{\bm 0} $ indicates the optimal strategy sequence of $ \mathit{CS} $, where the strategy in the current stage hinges on the gaming result in the previous stage and impacts next-stage strategies. However, the optimal strategy sequence is intractable through general backward induction algorithms \cite{GameTheory1998}, as the complexity increases exponentially with $ t $.

Intuitively, such an over-time strategy conducting process is equivalent to an MDP \cite{Bellman1957}, where $ \mathit{CS} $ makes decisions about $ p_{i}(t) $ sequentially through interacting with the environment, i.e., evaluating local updates. In other words, the MDP 3-tuple could be naturally found in the server-level sub-game, and the optimal strategy sequence could be solved by RL algorithms. Therefore, we first model the Server-level sub-game as an MDP $ \big\langle \mathcal{S}_{\mathit{CS}}, \mathcal{A}_{\mathit{CS}}, R_{\mathit{CS}}(\cdot) \big\rangle $, where $ \mathcal{S}_{\mathit{CS}} $ is the state space, $ \mathcal{A}_{\mathit{CS}}\equiv\mathcal{G}_{\mathit{CS}} $ is the action space, and $ R_{\mathit{CS}}(\cdot) $ is the reward function. Below we define the 3-tuple in detail.

\noindent$ \bullet $ \textit{State:} $ s_{\mathit{CS}}(t)\triangleq\left[\hat{\mathit{acc}}_{1}(t),\!\cdots\!,\hat{\mathit{acc}}_{N}(t)\right]\in\mathcal{S}_{\mathit{CS}} $, where $ \hat{\mathit{acc}}_{i}(t) $ is the prediction accuracy of $ \hat{w}_{i}(t) $ on $ D_{\mathit{CS}} $.

\noindent$ \bullet $ \textit{Action:} $ a_{\mathit{CS}}(t)\triangleq g_{\mathit{CS}}(t)= \left[p_{1}(t),\!\cdots\!,p_{N}(t)\right]\in\mathcal{A}_{\mathit{CS}} $.

\noindent$ \bullet $ \textit{Reward:} $ r\!_{\mathit{CS}}(t)=R_{\mathit{CS}}(s_{\mathit{CS}}(t),a_{\mathit{CS}}(t),s_{\mathit{CS}}(t+1))\triangleq u_{\mathit{CS}}(t)=1/\sum\nolimits_{i=1}^{N}\left(p_{i}(t)\!\cdot\! f_{i}(\hat{w}_{i}(t))\right) $.

Mathematically, the Server-level MDP is defined as:
\begin{equation*}
	\textbf{P}^{\bm\prime}_{\bm 0}\_\mathit{CS}~:~\underset{{\mu}_{\mathit{CS}}(\cdot)}{\mathop{\max}}~J_{\mathit{CS}}(\cdot),
\end{equation*}
where $ \mu_{\mathit{CS}}(\cdot):s_{\mathit{CS}}(t)\rightarrow a_{\mathit{CS}}(t) $ is the policy, $ J_{\mathit{CS}}(\cdot)= \sum_{t=1}^{T}({\gamma}^{t-1}\!\cdot\! r_{\mathit{CS}}(t)) $, and $ \gamma $ is the discount factor. Note that $ \textbf{P}^{\bm\prime}_{\bm 0}\_\mathit{CS} $ is approximately equivalent to the Server-level sub-game, as $ \gamma $ is usually set as $ 0.99 $ in practice.

Due to high-dimensional and continuous action and state space, we introduce DDPG, which consists of a MainNet and a TargetNet with the same {\it Actor-Critic} structure \cite{Lillicrap2016}, to solve $ \textbf{P}^{\bm\prime}_{\bm 0}\_\mathit{CS} $. In the MainNet, the {\it Actor} is expressed as $ \mu_{\mathit{CS}}\left(\cdot;\theta_{\mathit{CS}}^{\mu}(t)\right) $, which takes $ s_{\mathit{CS}}(t) $ as the input and outputs $ a_{\mathit{CS}}(t) $ through the parameterized policy $ \theta_{\mathit{CS}}^{\mu}(t) $. The {\it Critic} takes $ s_{\mathit{CS}}(t) $ and $ a_{\mathit{CS}}(t) $ as the input and outputs the value of the parameterized state-action function $ Q^{\mu}_{\mathit{CS}}(\cdot;\theta^{Q}_{\mathit{CS}}(t)) $. In addition, the TargetNet is a copy of the MainNet, which is parameterized by $ \mu'_{\mathit{CS}} (\cdot; {\theta}^{\mu'}_{\mathit{CS}}(t)) $ and $Q^{\mu'}_{\mathit{CS}}(\cdot;{\theta}^{Q'}_{\mathit{CS}}(t) ) $. The detailed strategy exploration process is shown in Algorithm \ref{alg-1}, where the best policy $\mu_{\mathit{CS}}^{*}(\cdot)$ outputs the selected action sequence $ A^{*}_{\mathit{CS}}=\left[a^{*}_{\mathit{CS}}(1),\!\cdots\!,a^{*}_{\mathit{CS}}(T)\right] $, which is $ \mathit{CS} $'s optimal strategy sequence $ G^{*}_{\mathit{CS}}(1)\!=\!\left[g^{*}_{\mathit{CS}}(1),\!\cdots\!,g^{*}_{\mathit{CS}}(T)\right]\!\equiv\! A^{*}_{\mathit{CS}} $ in the equilibrium of $ \textbf{P}^{\bm\prime}_{\bm 0} $. 

\begin{algorithm}
	\small
	\caption{Global Aggregation Weights Tuning}  
	\begin{algorithmic}[1]\label{alg-1}
		\REQUIRE $ l_{CS}^{Cri} $ and $ l_{CS}^{Act} $ are the learning rates for {\it Critic} and {\it Actor} in the MainNet; $ \beta_{CS} $ is a tiny updating rate for the TargetNet; $ \vert B\vert $ is the batch size.
		\ENSURE $ p_{i}(t)\vert i=1,\!\cdots\!,N,t=1,\!\cdots\!,T $.
		\STATE Initialize $ \theta_{CS}^{\mu}(\cdot) $, $ \theta_{CS}^{Q}(\cdot) $, $ \theta_{CS}^{\mu'}(\cdot) $, and $ \theta_{CS}^{Q'}(\cdot) $;
		\FOR{$ t=1,\!\cdots\!,T $}
		\STATE Observe $ s_{CS}(t) $, and hence calculate $ r_{CS}(t) $;
		\STATE Randomly sample a batch of experience tuples \\$ \left(s_{\mathit{CS}}(\xi),\!a_{\mathit{CS}}(\xi),r_{\mathit{CS}}(\xi),s_{\mathit{CS}}(\xi\!+\!1) \right), \xi\!=\!1,\!\cdots\!,\vert B\vert  $;
		\FOR{$ \xi=1,\!\cdots\!,\vert B\vert $}
		\STATE Calculate $ {y}_{\mathit{CS}}(\xi)\!=\!{r}_{\mathit{CS}}(\xi)+\gamma\cdot{Q}_{\mathit{CS}}^{\mu'}({s}_{\mathit{CS}}(\xi+1),\mu'_{i}(s_{\mathit{CS}}(\xi+1);{\theta}^{\mu'}_{\mathit{CS}}(t\!-\!1));{\theta}_{\mathit{CS}}^{{Q}^{\prime}}(t\!-\!1)) $;
		\ENDFOR
		\STATE Calculate $ \mathit{Loss}_{\mathit{CS}}(t\!-\!1)\!=\!1/|B|\sum\nolimits_{\xi=1}^{|B|}(y_{\mathit{CS}}(\xi)-Q_{\mathit{CS}}^{\mu}(s_{\mathit{CS}}(\xi),a_{\mathit{CS}}(\xi);{\theta}_{\mathit{CS}}^{Q}(t\!-\!1) ) )^{2}  $;
		\STATE Update $ \theta_{\mathit{CS}}^{Q}\!(t), \theta_{\mathit{CS}}^{\mu}\!(t),\theta_{\mathit{CS}}^{\mu'}\!(t) $, and $ \theta_{\mathit{CS}}^{Q'}\!(t) $ as follows:\\
		$ \theta_{\mathit{CS}}^{Q}(t)\!=\!\theta_{\mathit{CS}}^{Q}(t\!-\!1)-{l}_{\mathit{CS}}^{Cri}\cdot {\nabla_{\theta_{\mathit{CS}}^{Q}}} \mathit{Loss}_{\mathit{CS}}(t\!-\!1) $,\\
		$ \theta_{\mathit{CS}}^{\mu}(t)\!=\!\theta_{\mathit{CS}}^{\mu}(t\!-\!1)+{l}_{\mathit{CS}}^{Act}\cdot {\nabla_{\theta_{\mathit{CS}}^{\mu}}} J_{\mathit{CS}}(t\!-\!1) $,\\
		$ \theta_{\mathit{CS}}^{\mu'}(t)\!=\!\beta_{\mathit{CS}} \cdot {\theta_{\mathit{CS}}^{\mu }(t\!-\!1)}+\left( 1\!-\!\beta_{\mathit{CS}} \right) \cdot \theta_{\mathit{CS}}^{\mu'}(t\!-\!1) $, \\ 
		$ \theta_{\mathit{CS}}^{Q'}(t)\!=\!\beta_{\mathit{CS}} \cdot {\theta_{\mathit{CS}}^{Q}(t\!-\!1)}+\left( 1\!-\!\beta_{\mathit{CS}} \right) \cdot {\theta }_{\mathit{CS}}^{Q'}(t\!-\!1) $;
		
		\ENDFOR
		\RETURN $ \mu_{\mathit{CS}}^{*}(\cdot) $;
		\RETURN $ G^{*}_{\mathit{CS}}(1)\!=\!A^{*}_{\mathit{CS}}\!=\!\left[p_{i}(t)\vert i\!=\!1,\!\cdots\!,N,t\!=\!1,\!\cdots\!,T\right] $.
	\end{algorithmic}
\end{algorithm}

\subsection{Strategy Exploration in the Client-level Sub-game}

In the same vein, we model $ c_{i} $'s Client-level sub-game as an MDP, and define the 3-tuple as follows.

\noindent$ \bullet $ \textit{State:} $ 
s_{i}(t)\triangleq\left[\mathit{acc}_{i}(t)\right]\in\mathcal{S}_{i} $, where $ \mathit{acc}_{i}(t) $ is the prediction accuracy of $ w_{i}(t) $ on $ D_{i} $, and $ \mathcal{S}_{i} $ is the state space.

\noindent$ \bullet $ \textit{Action:} $
a_{i}(t)\triangleq g_{i}(t)= \left[\alpha_{i}(t),\eta_{i}(t)\right]\in \mathcal{A}_{i} $, where $ \mathcal{A}_{i}\equiv\mathcal{G}_{i} $ is the action space.

\noindent$ \bullet $ \textit{Reward:} $ r_{i}(t)\!=\!R_{i}(s_{i}(t),a_{i}(t),s_{i}(t\!+\!1))\!\triangleq\!u_{i}(t)=\frac{1}{f_{i}(\hat{w}_{i}(t))} $.

Accordingly, $ c_{i} $'s Client-level MDP is defined as:
\begin{equation*}
	\textbf{P}^{\bm\prime}_{\bm 0}\_c_{i}~:~\underset{{\mu}_{i}(\cdot)}{\mathop{\max}}~J_{i}(\cdot),
\end{equation*}
where $ \mu_{i}(\cdot):s_{i}(t)\rightarrow a_{i}(t) $ is the policy, and $ J_{i}(\cdot)= \sum_{t=1}^{T}\left(\gamma^{t\!-\!1}\cdot{r}_{i}(t)\right) $. 

Similarly, $ \textbf{P}^{\bm\prime}_{\bm 0}\_c_{i} $ can be solved by performing Algorithm \ref{alg-2} along with the gaming process, which outputs the appropriate action sequence $ A^{*}_{i}=\left[a^{*}_{i}(1),\!\cdots\!,a^{*}_{i}(T)\right] $, i.e., the strategy sequence $ G^{*}_{i}(1)=\left[g^{*}_{i}(1),\!\cdots\!,g^{*}_{i}(T)\right]\equiv A^{*}_{i} $ in the equilibrium. 

\begin{algorithm}[htbp]
	\small
	\caption{Local Training Hyper-parameters Tuning}  
	\begin{algorithmic}[1]\label{alg-2}
		\REQUIRE $ \theta_{i}^{\mu}\!(\cdot),\theta_{i}^{Q}\!(\cdot),\theta_{i}^{\mu'}\!(\cdot)  $, and $ \theta_{i}^{Q'}\!(\cdot)  $ are $ c_{i} $'s DDPG model parameters; $ l_{i}^{Cri} $ and $ l_{i}^{Act} $ are the learning rates for {\it Critic} and {\it Actor} in the MainNet; $ \beta_{i} $ is the tiny updating rate for the TargetNet.
		\ENSURE $ \left[\alpha_{i}(t),\eta_{i}(t)\vert t=1,\!\cdots\!,T\right] $.
		\FOR{$ i=1,\cdots,N $}
		\STATE Initialize $ \theta_{i}^{\mu}(\cdot) $, $ \theta_{i}^{Q}(\cdot) $, $ \theta_{i}^{\mu'}(\cdot) $, and $ \theta_{i}^{Q'}(\cdot) $;
		\FOR{$ t=1,\!\cdots\!,T $}
		\STATE Observe $ s_{i}(t) $, and hence calculate $ r_{i}(t) $;
		\STATE Sample $ \left(\!s_{i}(\xi),\!a_{\mathit{CS}}(\xi),\!r_{i}(\xi),\!s_{i}(\xi+1) \right)\!,\! \xi\!=\!1,\!\cdots\!,\vert B\vert  $;
		\FOR{$ \xi=1,\!\cdots\!,\vert B\vert $}
		\STATE Calculate $ y_{i}(\xi) $ like Line 6, Algorithm \ref{alg-1};
		\ENDFOR
		\STATE Calculate\;$ \mathit{Loss}_{\mathit{CS}}(t\!-\!1) $ like Line 8, Algorithm 1;
		\STATE Update $ \theta_{i}^{Q}(t),\! \theta_{i}^{\mu}(t),\! \theta_{i}^{\mu'}(t) $,\;and $ \theta_{i}^{Q'}(t) $ like Line 9, Algorithm \ref{alg-1};
		\ENDFOR
		\ENDFOR
		\RETURN $ \mu_{i}^{*}(\cdot) $;
		\RETURN $ G^{*}_{i}(1)\!=\!A^{*}_{i}\!=\!\left[\alpha_{i}(t),\eta_{i}(t)\vert t\!=\!1,\!\cdots\!,T\right] $.
	\end{algorithmic}		
\end{algorithm}

\subsection{Workflow of PAGE}

Consequently, we propose PAGE, where $ \mathit{CS} $ and $ c_{i} $ collaboratively train global and local models by adaptively adjusting aggregation weights and local training hyper-parameters. To provide an overall insight, we illustrate the $ t $-th round of PAGE in Figure \ref{fig-1}, and depict the details as follows:
\begin{itemize}
	\item[\ding{172}]{At the beginning of the $ t $-th training round, $ \mathit{CS} $ first distributes the global model $ W(t) $ to every $ c_{i} $.}
	\item[\ding{173}]{Every $ c_{i} $ initializes the local model $ w_{i}(t) $ as $ W(t) $. Then, $ c_{i} $ updates the local DDPG model parameters $ \theta_{i}^{Q}(t) $, $ \theta_{i}^{\mu}(t) $, $ \theta_{i}^{\mu'}(t)$, and $ \theta_{i}^{Q'}(t) $ to generate $ \alpha_{i}(t) $ and $ \eta_{i}(t) $.}
	\item[\ding{174}]{$ c_{i} $ updates $ w_{i}(t) $ to $ \hat{w}_{i}(t) $ using $ \alpha_{i}(t) $ and $ \eta_{i}(t) $, simply expressed as $ \hat{w}_{i}(t)=\mathit{Train}\left(\eta_{i}(t),\alpha_{i}(t);w_{i}(t)\right) $.}
	\item[\ding{175}]{$ c_{i} $ uploads $ \hat{w}_{i}(t) $ to $ \mathit{CS} $.}
	\item[\ding{176}]{$ \mathit{CS} $ updates the global DDPG model parameters $ \theta_{\mathit{CS}}^{Q}(t) $, $ \theta_{\mathit{CS}}^{\mu}(t) $, $ \theta_{\mathit{CS}}^{\mu'}(t)$, and $ \theta_{\mathit{CS}}^{Q'}(t) $ to generate aggregation weights $ \left\{p_{i}(t)\vert i=1,\!\cdots\!,N \right\}$.}
	\item[\ding{177}]{$ \mathit{CS} $ aggregates $ \hat{w}_{i}(t),i\!=\!1,\!\cdots\!,N $ to update the global model as $ W(t\!+\!1) $ according to Eq. (\ref{eq-5}).}
	\item[\textbf{PS}:]{$ \bullet $ $ \mathit{CS} $ and $ c_{i} $ periodically perform \ding{172}-\ding{177} until $ W(t) $ and $ w_{i}(t) $ stop evolving, i.e., achieving the equilibrium of $ \textbf{P}^{\bm\prime}_{\bm 0} $.\\
		$ \bullet $ In the initial training round, the local model training and aggregation are performed by randomly selecting $ \alpha_{i}(t) $, $ \eta_{i}(t) $, and $ p_{i}(t) $, as DDPG models cannot update without prior experience \cite{Arulkumaran2017}.}
\end{itemize}

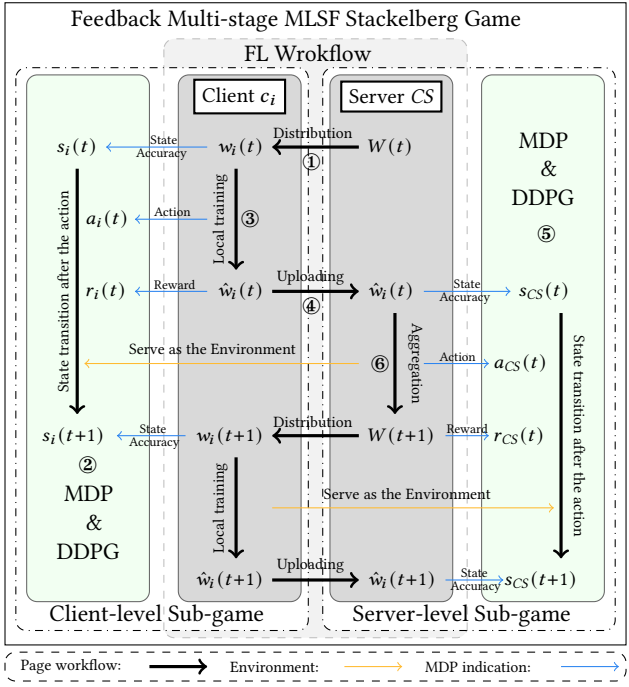
\begin{figure}
	\centering 
	\begin{tikzpicture}[scale=0.96]
		
		\path[fill=green!0, draw=black!100](0.0,0.0) rectangle (8.6,-8.9);
		
		\path[fill=black!5,rounded corners, draw=black!25, dashed](2.2,-0.5) rectangle (6.4,-8.8);
		
		\path[fill=green!0,rounded corners, draw=black!100, dash dot](0.15,-0.9) rectangle (4.2,-8.7);
		
		\path[fill=green!0,rounded corners, draw=black!100, dash dot](4.4,-0.9) rectangle (8.45,-8.7);
		
		
		\draw[dashed,-,color=black!25](2.2,-0.9)-- (2.2,-8.7);
		
		\draw[dashed,-,color=black!25](6.4,-0.9)-- (6.4,-8.7);
		
		
		\path[fill=green!5,rounded corners, draw=black!60](0.3,-1.0) rectangle (2,-8.3);
		
		\path[fill=black!15,rounded corners, draw=black!60](2.4,-1.0) rectangle (4.1,-8.3);
		
		\path[fill=black!15,rounded corners, draw=black!60](4.5,-1.0) rectangle (6.2,-8.3);
		
		\path[fill=green!5,rounded corners, draw=black!60](6.6,-1.0) rectangle (8.3,-8.3);
		
		\node[fill=blue!0,draw=black,thick] at (3.25,-1.3) (User){\small{Client $ c_{i} $}};
		
		\node[fill=blue!0,draw=black,thick] at (5.35,-1.3) (CS){\small{Server $ \mathit{CS} $}};
		
		\path[fill=yellow!0,rounded corners, draw=black!100, dashed](0.0,-9.0) rectangle (8.6,-9.4);
		
		\node[right] at (0.6,-2){\footnotesize{$ s_{i}(t) $}};
		
		\node[right] at (1.0,-3){\footnotesize{$ a_{i}(t) $}};
		
		\node[right] at (1.0,-4){\footnotesize{$ r_{i}(t) $}};
		
		\node[right] at (0.4,-6){\footnotesize{$ s_{i}(t\!+\!1) $}};
		
		\draw[very thick,->](1.0,-2.3)-- (1.0,-5.7);
		
		\node[left] at (1.0,-4){\scriptsize{\rotatebox{90}{State transition after the action}}};
		
		\draw[very thick,<-](3.7,-2)-- (4.9,-2);
		
		\node[right] at (3.6,-1.8){\scriptsize{Distribution}};
		
		\node[left] at (3.7,-2){\footnotesize{$ w_{i}(t) $}};
		
		\node[right] at (4.9,-2){\footnotesize{$ W(t) $}};
		
		\node[left] at (3.2,-3){\scriptsize{\rotatebox{90}{Local training}}};
		
		\draw[very thick,->](3.2,-2.3)-- (3.2,-3.7);
		
		\draw[very thick,->](3.7,-4)-- (4.9,-4);
		
		\node[right] at (3.65,-3.8){\scriptsize{Uploading}};
		
		\node[left] at (3.7,-4){\footnotesize{$ \hat{w}_{i}(t) $}};
		
		\node[right] at (4.9,-4){\footnotesize{$ \hat{w}_{i}(t) $}};
		
		\node[left] at (5.9,-5){\scriptsize{\rotatebox{270}{Aggregation}}};
		
		\draw[very thick,->](5.4,-4.3)-- (5.4,-5.7);
		
		\draw[very thick,<-](3.7,-6)-- (4.9,-6);
		
		\node[right] at (3.6,-5.8){\scriptsize{Distribution}};
		
		\node[left] at (3.7,-6){\footnotesize{$ w_{i}(t\!+\!1) $}};
		
		\node[right] at (4.9,-6){\footnotesize{$ W(t\!+\!1) $}};
		
		\node[left] at (3.2,-7){\scriptsize{\rotatebox{90}{Local training}}};
		
		\draw[very thick,->](3.2,-6.3)-- (3.2,-7.7);
		
		\draw[very thick,->](3.7,-8)-- (4.9,-8);
		
		\node[right] at (3.65,-7.8){\scriptsize{Uploading}};
		
		\node[left] at (3.7,-8){\footnotesize{$ \hat{w}_{i}(t\!+\!1) $}};
		
		\node[right] at (4.9,-8){\footnotesize{$ \hat{w}_{i}(t\!+\!1) $}};
		
		\node[right] at (7.0,-4){\footnotesize{$ s_{\mathit{CS}}(t) $}};
		
		\node[right] at (6.65,-5){\footnotesize{$ a_{\mathit{CS}}(t) $}};
		
		\node[right] at (6.65,-6){\footnotesize{$ r_{\mathit{CS}}(t) $}};
		
		\node[right] at (6.8,-8){\footnotesize{$ s_{\mathit{CS}}(t\!+\!1) $}};
		
		\draw[very thick,->](7.7,-4.3)-- (7.7,-7.7);
		
		\node[left] at (8.15,-6){\scriptsize{\rotatebox{270}{State transition after the action}}};

		\draw[color=GoogleBlue,thin,<-](1.4,-2)-- (2.8,-2);
		
		\node[right] at (1.9,-1.9){\tiny{State}};
		
		\node[right] at (1.7,-2.1){\tiny{Accuracy}};
		
		\draw[color=GoogleBlue,thin,<-](1.8,-3)-- (2.8,-3);
		
		\node[right] at (1.95,-2.9){\tiny{Action}};
		
		\draw[color=GoogleBlue,thin,<-](1.8,-4)-- (2.8,-4);
		
		\node[right] at (1.95,-3.9){\tiny{Reward}};
		
		\draw[color=GoogleBlue,thin,<-](1.55,-6)-- (2.5,-6);
		
		\node[right] at (1.75,-5.9){\tiny{State}};
		
		\node[right] at (1.6,-6.1){\tiny{Accuracy}};

		\draw[color=GoogleBlue,thin,->](5.8,-4)-- (7.0,-4);
		
		\node[right] at (6.1,-3.9){\tiny{State}};
		
		\node[right] at (5.9,-4.1){\tiny{Accuracy}};
		
		\draw[color=GoogleBlue,thin,->](5.8,-5)-- (6.7,-5);
		
		\node[right] at (5.9,-4.9){\tiny{Action}};
		
		\draw[color=GoogleBlue,thin,->](6.1,-6)-- (6.7,-6);
		
		\node[right] at (6.0,-5.9){\tiny{Reward}};
		
		\draw[color=GoogleBlue,thin,->](6.1,-8)-- (6.9,-8);
		
		\node[right] at (6.2,-7.9){\tiny{State}};
		
		\node[right] at (6.05,-8.1){\tiny{Accuracy}};
		
		
		\draw[color=GoogleYellow,thin,<-](1.1,-5)-- (4.9,-5);
		
		\node[right] at (1.6,-4.8){\scriptsize{Serve as the Environment}};
		
		\draw[color=GoogleYellow,thin,->](3.7,-7)-- (7.6,-7);
		
		\node[right] at (4.3,-6.8){\scriptsize{Serve as the Environment}};
		
		\node[right] at (3.2,-0.7){\normalsize{FL Wrokflow}};
		
		\node[right] at (0.8,-0.25){\normalsize{Feedback Multi-stage MLSF Stackelberg Game}};
		
		\node[right] at (0.5,-8.5){\normalsize{Client-level Sub-game}};
		
		\node[right] at (4.7,-8.5){\normalsize{Server-level Sub-game}};
		
		\node[right] at (0.7,-6.8){\normalsize{MDP}};
		
		\node[right] at (1.0,-7.2){\normalsize{\&}};
		
		\node[right] at (0.6,-7.6){\normalsize{DDPG}};
		
		\node[right] at (7.0,-1.9){\normalsize{MDP}};
		
		\node[right] at (7.3,-2.3){\normalsize{\&}};
		
		\node[right] at (6.9,-2.7){\normalsize{DDPG}};
		
		\draw[color=black,very thick,->](2.0,-9.2)-- (2.8,-9.2);
		
		\node[right] at (0.1,-9.2){\scriptsize{Page workflow:}};
		
		\draw[color=GoogleYellow,thin,->](4.7,-9.2)-- (5.5,-9.2);
		
		\node[right] at (3.0,-9.2){\scriptsize{Environment:}};
		
		\draw[color=GoogleBlue,thin,->](7.7,-9.2)-- (8.5,-9.2);
		
		\node[right] at (5.7,-9.2){\scriptsize{MDP indication:}};
		
		\node[right] at (4.0,-2.2){\ding{172}};
		
		\node[right] at (0.9,-6.4){\ding{173}};
		
		
		\node[right] at (3.15,-3.0){\ding{174}};
		
		\node[right] at (4.0,-4.2){\ding{175}};
		
		\node[right] at (7.25,-3.2){\ding{176}};
		
		\node[right] at (4.95,-5.0){\ding{177}};

	\end{tikzpicture}
	\caption{Workflow of PAGE.}
	\label{fig-1}
\end{figure}

\subsection{Theoretical Analysis for the Equilibrium}

We then analyze the existence of the equilibrium of $ \textbf{P}^{\bm\prime}_{\bm 0} $, which is equivalent to the convergence analysis in TFL and PFL. 

Before proceeding further, we define the equilibrium of $ \textbf{P}^{\bm\prime}_{\bm 0} $ in advance. Since the pay-off of every participant in a multi-stage game is an accumulating pursuit, rather than any attained peak, however large, we primarily define the pay-off functions of the sub-games.
\begin{definition}[Follower's pay-off function \cite{GameTheory1998}]\label{def-2A}
	The pay-off function of $ \mathit{CS} $ is the discounted accumulation of $ u_{\mathit{CS}}(t) $ from the $ \tau $-th stage, denoted by $ U_{\mathit{CS}}(\cdot) $ and defined as:
	\begin{equation}\label{eq-6A}
		U_{\mathit{CS}}\left( G_{\mathit{CS}}(\tau)\right)\!=\! \sum_{t=\tau}^{T}\!\left( \gamma^{t}\! \!\cdot\! u_{\mathit{CS}}(t) \right) \!=\!\sum_{t=\tau}^{T}\frac{\gamma^{t}}{F\left(W(t+1)\right)},
	\end{equation}
	where $ G_{\mathit{CS}}(\tau)\!=\!\left[g_{\mathit{CS}}(\tau),\!\cdots\!,g_{\mathit{CS}}(T)\right], \forall \tau\!=\!1,\!\cdots\!,T $ is $ \mathit{CS} $'s strategy sequence from the $ \tau $-th stage.
\end{definition}

\begin{definition}[Leader's pay-off function \cite{GameTheory1998}]\label{def-3A}
	The pay-off function of $ c_{i} $ is the discounted accumulation of $ u_{i}(t) $ from the $ \tau $-th stage, denoted by $ U_{i}(\cdot) $ and defined as:
	\begin{equation}\label{eq-7A}
		U_{i}\left(G_{i}(\tau)\right)= \sum_{t=\tau}^{T}{\left( \gamma^{t} \cdot u_{i}(t) \right) }=\sum_{t=\tau}^{T}\frac{\gamma^{t}}{f_{i}\left(w_{i}(t)\right)}.
	\end{equation}
	where $ G_{i}(\tau)=\left[g_{i}(\tau),\!\cdots\!,g_{i}(T)\right], \forall \tau=1,\!\cdots\!,T $ is $ c_{i} $'s strategy sequence from the $ \tau $-th stage.
\end{definition}

Thus, the equilibrium of $ \textbf{P}^{\bm\prime}_{\bm 0} $ can be defined as follows:
\begin{definition}[Feedback Stackelberg Equilibrium (FSE) \cite{GameTheory1998}]\label{def-4A}
	Given a feedback multi-stage MLSF Stackelberg game $ \textbf{\rm P}^{\bm\prime}_{\bm 0} $, the feedback stackelberg equilibrium is denoted by $ G^{*}(\tau)=\left[ G_{1}^{*}(\tau),\!\cdots\!,G_{N}^{*}(\tau), G_{\mathit{CS}}^{*}(\tau)\right] $ and defined as:
	\begin{equation}\label{eq-8A}
		\begin{aligned}
			&U_{\mathit{CS}}\left(G^{*}(\tau)\right) \geq U_{\mathit{CS}}\left(g_{\mathit{CS}}(\epsilon),G^{*}(\tau)\backslash g_{\mathit{CS}}^{*}(\epsilon)\right),\\
			&U_{i}\left(G^{*}(\tau)\right) \!\geq\! U_{i}\left( g_{i}(\epsilon),G^{*}(\tau)\backslash g_{i}^{*}(\epsilon)\right),\forall i\!=\!1,\!\cdots\!,N,
		\end{aligned}
	\end{equation}
	where $ \epsilon $ is the stage index in the range of $ \left[\tau,T\right] $, $ g_{i}^{*}(t) $ and $ g_{\mathit{CS}}^{*}(t) $ are optimal strategies for obtaining the maximal utilities at the $ t $-th stage, $ G_{i}^{*}(\tau)=\left[g_{i}^{*}(\tau),\!\cdots\!,g_{i}^{*}(T)\right] $ and $ G_{\mathit{CS}}^{*}(\tau)=\left[g_{\mathit{CS}}^{*}(\tau),\!\cdots\!,g_{\mathit{CS}}^{*}(T)\right] $ are the optimal strategy sequences from the $ \tau $-th stage, and $ G^{*}(\tau)\backslash g_{i}^{*}(\epsilon) $ and $ G^{*}(\tau)\backslash g_{\mathit{CS}}^{*}(\epsilon) $ indicate the optimal strategy sequences except for $ g_{i}^{*}(\epsilon) $ and $ g_{\mathit{CS}}^{*}(\epsilon) $, respectively.
\end{definition}

Definition \ref{def-4A} expounds that reaching the FSE at which $ \textbf{P}^{\bm\prime}_{\bm 0} $ ends requires a series of sequential interactions, no matter what stage the measurement starts from. 

Based on above definitions, the existence of the FSE of $ \textbf{P}^{\bm\prime}_{\bm 0} $ can be disclosed by Theorem \ref{thm-1}.

\begin{theorem}\label{thm-1}
	For $ \textbf{\rm P}^{\bm\prime}_{\bm 0} $, the feedback stackelberg equilibrium (FSE) $ G^{*}(\tau) $ always exists.
\end{theorem}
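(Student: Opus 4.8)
The plan is to prove existence by backward induction on the finite stage index $t=T,T-1,\ldots,1$, solving a static MLSF stage game at each step and stitching the per-stage solutions into the feedback sequence $G^{*}(\tau)$. Because the horizon $T$ is finite and the feedback information structure lets each stage collapse to a one-shot Stackelberg problem once the continuation pay-offs are known, this recursion is well-founded: if every stage game admits a leader-follower solution and the induced value functions are continuous, then the concatenation of the stagewise optimal strategies satisfies the inequalities of Definition~\ref{def-4A} no matter which starting stage $\tau$ the measurement begins from.

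First I would fix the standing regularity facts that make the recursion run. I would argue that each leader's strategy space $\mathcal{G}_{i}$ (the epoch count $\alpha_{i}(t)$ and learning rate $\eta_{i}(t)$ ranging over bounded intervals) and the follower's space $\mathcal{G}_{\mathit{CS}}$ (the probability simplex $\{p_{i}\ge 0,\ \sum_{i}p_{i}=1\}$) are nonempty, compact, and convex subsets of Euclidean space. I would then record that each $f_{i}$ is continuous and strictly positive and that $\mathit{Train}(\eta_{i},\alpha_{i};w_{i})$ is continuous in its hyper-parameters, so that $u_{i}=1/f_{i}(\hat{w}_{i})$ and $u_{\mathit{CS}}=1/\sum_{i}p_{i}f_{i}(\hat{w}_{i})$ are continuous and bounded on the product strategy space. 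These two ingredients — compact convex domains and continuous bounded pay-offs — are exactly what the Weierstrass and Berge maximum theorems require.

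The induction itself proceeds as follows. For the terminal stage $t=T$ there is no continuation term, so I would first solve the follower's problem: given the leaders' strategies (hence given $\hat{w}_{i}(T)$), $\mathit{CS}$ maximizes $u_{\mathit{CS}}(T)$ over the simplex. Since $\sum_{i}p_{i}f_{i}$ is linear, the upper level sets of $u_{\mathit{CS}}$ are convex, so by Weierstrass the best-response set is nonempty, compact and convex, and by Berge it is upper hemicontinuous in the leaders' strategies. Substituting this reaction back, each leader maximizes its own $u_{i}(T)$ over the compact set $\mathcal{G}_{i}$; existence of a maximizer is again Weierstrass, and this defines the stage-$T$ value functions $V_{i}^{T},V_{\mathit{CS}}^{T}$, continuous by the maximum theorem. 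For the inductive step I would assume $V_{i}^{t+1},V_{\mathit{CS}}^{t+1}$ are continuous, form the stage-$t$ pay-offs $\gamma^{t}u_{\bullet}(t)+V_{\bullet}^{t+1}$ with the continuation value evaluated at the next state $W(t+1)=\sum_{i}p_{i}(t)\hat{w}_{i}(t)$, and repeat the follower-then-leaders argument; concatenating the resulting $g_{i}^{*}(t),g_{\mathit{CS}}^{*}(t)$ over $t\in[\tau,T]$ yields $G^{*}(\tau)$ and the equilibrium inequalities.

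The hard part will be the inductive step rather than the terminal base case. At an interior stage the continuation term couples the players through the state transition $W(t+1)=\sum_{i}p_{i}(t)\hat{w}_{i}(t)$: the leaders, who are decoupled within a single stage (each $u_{i}$ depends only on $g_{i}$ given $w_{i}(t)=W(t)$), become coupled across stages because every leader's future value depends on the aggregated global model, and the follower's pay-off acquires a dependence on the weights through $W(t+1)$ that the nonlinear $\mathit{Train}$ map and the composition with $V^{t+1}$ may render non-quasi-concave. I therefore expect to establish the coupled stage-$t$ equilibrium not by direct optimization but by a fixed-point argument: assemble the joint best-response correspondence of all leaders and the follower, verify it is nonempty-, compact-, convex-valued and upper hemicontinuous, and apply Kakutani's theorem to extract a fixed point, which is the stage equilibrium. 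Guaranteeing convex-valuedness — that is, quasi-concavity of each player's stage objective in its own strategy once the continuation value is folded in — is the delicate point, and is where any further structural assumption on $f_{i}$ and $\mathit{Train}$ would have to be invested.
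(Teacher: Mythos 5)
Your skeleton is the same as the paper's: the paper likewise reduces existence of the FSE to a backward recursion, namely to verifying that the $T-\tau+1$ value functions $V^{*}_{\mathit{CS}}$ and $V^{*}_{i}$ satisfy the stagewise Bellman equations, and then appeals to the recursive ``verification theorem'' for dynamic Stackelberg games. Where you differ is in how much of that recursion you actually try to discharge. The paper stops at the reduction: it asserts that the verification can be done recursively and that existence ``can be confirmed in specific cases for which an explicit solution of the Bellman equations can be obtained.'' In other words, the paper's proof is conditional; it never establishes compactness of the strategy spaces, continuity of the pay-offs, existence of stagewise maximizers, or continuity of the resulting value functions --- all of that is hidden inside the phrase ``specific cases.'' You, by contrast, try to make the recursion unconditional: Weierstrass and Berge for the terminal stage and the follower's reaction, and a Kakutani fixed-point argument for the coupled interior stages, which is a strictly more ambitious (and more informative) execution of the same dynamic-programming idea.

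Precisely because you push further, you hit the real obstruction, and you name it correctly: once the continuation value $V^{t+1}$ is composed with the state transition $W(t+1)=\sum_{i} p_{i}(t)\hat{w}_{i}(t)$ and the nonlinear $\mathit{Train}$ map, there is no reason for each player's stage objective to be quasi-concave in its own strategy, so the joint best-response correspondence need not be convex-valued and Kakutani does not apply. That is a genuine gap in your argument, and it cannot be repaired without additional structural assumptions on $f_{i}$ and $\mathit{Train}$. But you should be aware that it is the very same gap the paper glosses over: the theorem claims the FSE ``always exists,'' yet the paper's own proof only delivers existence in those cases where the Bellman system happens to admit an explicit solution --- which is exactly where your missing quasi-concavity (or some substitute hypothesis) would have to be invested. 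So your proposal is not a complete proof of the unconditional statement, but neither is the paper's; yours has the merit of making explicit which regularity the claim actually requires.
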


\begin{proof}
	
	We first recall the definition of the value function to measure the strategy in the FSE.
	\begin{definition}[Value Function \cite{GameTheory1998}]\label{def-5A}
		Given $ \textbf{\rm P}^{\bm\prime}_{\bm 0} $ with the FSE $ G^{*}(\tau) $, let $ Z^{*}=\left[z^{*}(\tau),\!\cdots\!,z^{*}(T)\right] $ be the associated optimal gaming condition trajectory resulting from $ z^{*}(\tau) $. Then, the value functions of $ \mathit{CS} $ and $ c_{i} $ are expressed as:
		\begin{equation}\label{eq-9A}
			V_{\mathit{CS}}^{*}\left(z^{*}(\tau)\right)=\sum_{t=\tau}^{T} {\left( \gamma^t \cdot u_{\mathit{CS}}^{*}(t)\right) },
		\end{equation}
		and
		\begin{equation}\label{eq-10A}
			V_{i}^{*}\left(z^{*}(\tau)\right)=\sum_{t=\tau}^{T} {\left( \gamma^t \cdot u^{*}_{i}(t) \right) },\forall i=1,\!\cdots\!,N,
		\end{equation}		
		where $ u_{i}^{*}(t) $ and $ u_{\mathit{CS}}^{*}(t) $ are the utilities derived from $ g_{i}^{*}(t) $ and $ g_{\mathit{CS}}^{*}(t) $, respectively.
		
	\end{definition}
	
	Thus, we can obtain $ T\!-\!\tau+1 $ sets of value functions. As a result, the only way to confirm the existence of the FSE is to verify whether these value functions satisfy the Bellman equations, shown as:
	\begin{equation}\label{eq-11A}
		V_{\mathit{CS}}^{*}(\tau) = \max_{g_{1}(t),\!\cdots\!,g_{N}(t),g_{\mathit{CS}}(t)}\!\!\!\!\!\!\!{u_{\mathit{CS}}(t)} + \gamma \cdot V_{\mathit{CS}}^{*}\left(z^{*}(\tau+1)\right),
	\end{equation}
	and
	\begin{equation}\label{eq-12A}
		\begin{aligned}
			&V_{i}^{*}(z^{*}(\tau)) = \max_{g_{1}(t),\!\cdots\!,g_{N}(t),g_{\mathit{CS}}(t)}\!\!\!\!\!\!\!{u_{i}(t)} + \gamma \cdot V_{i}^{*}\left(z^{*}(\tau+1)\right),\\
			&\forall i=1,\!\cdots\!,N.
		\end{aligned}
	\end{equation}
	Note that the first term on the right side of Eq. (\ref{eq-11A}) highlights the maximal utilities given $ Z^{*} $, and the same to Eq. (\ref{eq-12A}). As a solution, the verification could be achieved through the recursive approach, which is referred to as the {\it verification theorem} \cite{GameTheory1998}. In other words, the existence of FSE can be confirmed in specific cases for which an explicit solution of the Bellman equations can be obtained, which completes the proof.	
\end{proof}

\section{Experiments and Evaluation}
\subsection{Experimental Settings}
In this section, we compare PAGE with 10 SOTA baselines, including 5 TFLs, i.e., FedAvg \cite{McMahan2017}, FedProx \cite{Li2020FedProx}, SCAFFOLD \cite{Karimireddy2020SCAFFOLD}, FedDyn \cite{Acar2021FedDyn}, and Dap-FL \cite{Chen2023Dap-FL}, as well as five PFLs, i.e., FEDRECON \cite{Singhal2021FedRECON}, pFedMe \cite{Dinh2020pFedMe}, Ditto \cite{Li2021Ditto}, FedALA \cite{Zhang2023FedALA}, and Fed-ROD \cite{Chen2022FED-ROD}. The global model generalization and local model personalization are evaluated through the global and local model accuracy over global and local testing sets (defined below), respectively. In particular, the recorded local model accuracy is the average of local model accuracy on clients' corresponding local testing sets. Notably, all presented results are averaged over 3 runs (entire collaborative training processes) with different random seeds. 

\textbf{Datasets and models:} Our experiments are conducted on four widespread public datasets\footnote{These datasets are collected by the ML community for academic research, and no ethical considerations or legal concerns were violated.}, including Synthetic \cite{Caldas2018Synthetic}, Cifar-100 \cite{CIFAR2009}, Tiny-ImageNet \cite{TinyImageNet2015}, and Shakespeare \cite{McMahan2017}. For Synthetic, we adopt a multi-class logistic classification model with cross-entropy loss \cite{Acar2021FedDyn}. Also, we adopt ResNet-18 \cite{He2016resnet} for Cifar-100 and Tiny-ImageNet, and LSTM \cite{Hochreiter1997LSTM} for Shakespeare. More details of leveraged datasets and corresponding models are summarized in Appendix C.1.

\textbf{FL settings and data partition:} By default, our experiments involve $ 100 $ clients for the four tasks\footnote{100 is a commonly used client amount to simulate the practical FL implementation in literature. So are 50 and 1000 in the following ablation analysis.}. For {\it Logistic on Synthetic}, we use a similar data generation process in \cite{Li2020FedProx}, where each $ c_{i} $ holds $ 210 $ training samples and $ 90 $ testing samples on average, and $ \mathit{CS} $ holds $ 7500 $ testing samples. Clients' samples comprise $ 30 $ dimensions of features and $ 30 $ classes, and $ \mathit{CS} $'s samples cover all features and classes. For {\it ResNet-18 on Cifar-100} and {\it ResNet-18 on Tiny-ImageNet}, we divide the original training set into $ 100 $ parts uniformly, where the class ratio of each part follows a widely used Dirichlet distribution $ Dir(\delta\!=\!0.3) $ \cite{Tan2022}. Each part is further partitioned as the local training and testing sets on a 7:3 scale, and the original testing/validation set is assigned to $ \mathit{CS} $ as the global testing set. For {\it LSTM on Shakespeare}, we pick the role with more than $ 8000 $ sentences as the client, where $ 4900 $ and $ 2100 $ sentences are used as the local training and testing data, respectively. The remaining sentences of the pricked $ 100 $ roles are the global testing data.

\textbf{Implementation and Hyperparmeters:} All simulations are implemented on the same computing environment (Linux, 32 Intel(R) Xeon(R) Silver 4108 CPU @ 1.80GHz, NVIDIA GeForce A100, 256GB of RAM and 2T of memory) with Pytorch. In addition, the hyper-parameter settings of PAGE are summarized in Appendix C.2, and baselines are implemented with their original hyper-parameters\footnote{For datasets not involved in original baselines, we provide the appropriate hyper-parameters in our released codes.}. We release the codes and datasets at https://github.com/ivy-h7/PAGE.

\subsection{Results and Evaluation}

\textbf{Prediction accuracy comparison:} Table \ref{tab-1} illustrates the comparison between PAGE and baselines in terms of global and local model accuracy. As expected, PAGE achieves at most 39.91\% gains in terms of local model accuracy, and the global model accuracy is improved by up to 35.20\%. Surprisingly, PAGE comprehensively outperforms all baselines in most cases, where the highest global and local model accuracy is achieved simultaneously, rather than achieving a moderate balance merely. The reason behind this observation is that PAGE integrates the advantages of PFL and TFL methods, to be more specific, local fine-tuning \cite{Yu2022} and client selection \cite{Lyu2020,Wang2020}. Also, we mention that the abnormality concerning global model generalization on Synthetic is attributed to the low degree of {\it data heterogeneity}, where the global models of baselines could generalize well.

\begin{table*}[htbp]
	\caption{Prediction accuracy comparison between PAGE and baselines. We record the average and variance of global models of 3 runs, as well as the average and variance of clients' local model accuracy. Also, {\it Improvement} refers to the largest accuracy improvement. Note that {\it Logistic on Synthetic} cannot be achieved by FedRECON, as the linear layer of the logistic model cannot be partitioned to construct local variables \cite{Chen2022FED-ROD}.}
	\label{tab-1}
	\small
	\centering  
	\begin{tabular}{m{1.5cm}<{\centering}||m{1.56cm}<{\centering}|m{1.56cm}<{\centering}||m{1.56cm}<{\centering}|m{1.56cm}<{\centering}||m{1.56cm}<{\centering}|m{1.56cm}<{\centering}||m{1.56cm}<{\centering}|m{1.56cm}<{\centering}}   	
		\hline  
		\hline  
		\multicolumn{1}{c||}{\multirow{2}*{Algorithm}} & 
		\multicolumn{2}{c||}{Logistic on Synthetic} & 
		\multicolumn{2}{c||}{ResNet-18 on Cifar-100} & 
		\multicolumn{2}{c||}{ResNet-18 on Tiny-ImageNet} & 
		\multicolumn{2}{c}{LSTM on Shakespeare} \\	
		\cline{2-9} 
		&\makecell[c]{global acc (\%)} & 
		\makecell[c]{local acc (\%)} &  
		\makecell[c]{global acc (\%)} & 
		\makecell[c]{local acc (\%)} & 
		\makecell[c]{global acc (\%)} & 
		\makecell[c]{local acc (\%)} & 
		\makecell[c]{global acc (\%)} & 
		\makecell[c]{local acc (\%)} \\
		\hline
		\multicolumn{9}{c}{More attention on the comparison with local model accuracy of TFL baselines} \\
		\hline
		\makecell[c]{FedAvg} & 
		\makecell[c]{91.46$\;\pm0.07 $} & 
		\makecell[c]{95.26$\;\pm1.26 $} &
		\makecell[c]{32.97$\;\pm0.03 $} & 
		\makecell[c]{38.30$\;\pm0.44 $} & 
		\makecell[c]{7.85$\;\pm0.04 $} & 
		\makecell[c]{11.29$\;\pm0.74 $} &
		\makecell[c]{47.52$\;\pm0.07 $} & 
		\makecell[c]{40.24$\;\pm1.45 $} \\
		\hline
		\makecell[c]{FedProx} & 
		\makecell[c]{91.48$\;\pm0.05 $} & 
		\makecell[c]{95.49$\;\pm0.35 $} &
		\makecell[c]{33.46$\;\pm0.12 $} & 
		\makecell[c]{39.22$\;\pm0.77 $} & 
		\makecell[c]{7.79$\;\pm0.05 $} & 
		\makecell[c]{11.55$\;\pm0.93 $} &
		\makecell[c]{47.29$\;\pm0.12 $} & 
		\makecell[c]{40.51$\;\pm1.25 $} \\	
		\hline
		\makecell[c]{SCAFFOLD} & 
		\makecell[c]{97.37$\;\pm0.08 $} & 
		\makecell[c]{95.71$\;\pm0.53 $} &
		\makecell[c]{32.81$\;\pm0.03 $} & 
		\makecell[c]{36.12$\;\pm0.81 $} & 
		\makecell[c]{8.39$\;\pm0.02 $} & 
		\makecell[c]{9.17$\;\pm0.94 $} &
		\makecell[c]{49.14$\;\pm0.06 $} & 
		\makecell[c]{39.36$\;\pm0.49 $} \\	
		\hline
		\makecell[c]{FedDyn} & 
		\makecell[c]{\underline{97.57}$\;\pm0.07 $} & 
		\makecell[c]{94.11$\;\pm1.42 $} &
		\makecell[c]{33.47$\;\pm0.05 $} & 
		\makecell[c]{35.28$\;\pm1.11 $} & 
		\makecell[c]{7.84$\;\pm0.27 $} & 
		\makecell[c]{11.45$\;\pm1.14 $} &
		\makecell[c]{51.68$\;\pm0.14 $} & 
		\makecell[c]{42.82$\;\pm0.72 $} \\	
		\hline
		\makecell[c]{Dap-FL} & 
		\makecell[c]{92.19$\;\pm0.13 $} & 
		\makecell[c]{94.14$\;\pm1.11 $} &
		\makecell[c]{32.28$\;\pm0.27 $} & 
		\makecell[c]{40.72$\;\pm1.41 $} & 
		\makecell[c]{8.40$\;\pm0.43 $} & 
		\makecell[c]{11.75$\;\pm2.19 $} &
		\makecell[c]{51.67$\;\pm0.26 $} & 
		\makecell[c]{48.85$\;\pm1.38 $} \\	
		\hline
		\multicolumn{9}{c}{More attention on the comparison with global model accuracy of PFL baselines} \\
		\hline
		\makecell[c]{FedRECON} & 
		\makecell[c]{/} & 
		\makecell[c]{/} &
		\makecell[c]{24.88$\;\pm0.14 $} & 
		\makecell[c]{31.75$\;\pm0.65 $} & 
		\makecell[c]{6.25$\;\pm0.25 $} & 
		\makecell[c]{10.15$\;\pm1.13 $} &
		\makecell[c]{38.54$\;\pm0.06 $} & 
		\makecell[c]{35.61$\;\pm2.02 $} \\	
		\hline
		\makecell[c]{pFedMe} & 
		\makecell[c]{85.59$\;\pm0.25 $} & 
		\makecell[c]{90.23$\;\pm1.02 $} &
		\makecell[c]{30.29$\;\pm0.03 $} & 
		\makecell[c]{38.68$\;\pm0.45 $} & 
		\makecell[c]{6.60$\;\pm0.08 $} & 
		\makecell[c]{9.23$\;\pm0.36 $} &
		\makecell[c]{43.19$\;\pm0.04 $} & 
		\makecell[c]{41.99$\;\pm0.69 $} \\	
		\hline
		\makecell[c]{Ditto} & 
		\makecell[c]{92.09$\;\pm0.17 $} & 
		\makecell[c]{95.56$\;\pm1.12 $} &
		\makecell[c]{31.86$\;\pm0.24 $} & 
		\makecell[c]{39.93$\;\pm1.35 $} & 
		\makecell[c]{7.77$\;\pm0.05 $} & 
		\makecell[c]{9.59$\;\pm0.22 $} &
		\makecell[c]{48.95$\;\pm0.04 $} & 
		\makecell[c]{47.05$\;\pm0.47 $} \\	
		\hline
		\makecell[c]{FedALA} & 
		\makecell[c]{85.51$\;\pm0.04 $} & 
		\makecell[c]{95.42$\;\pm1.07 $} &
		\makecell[c]{32.10$\;\pm0.05 $} & 
		\makecell[c]{39.63$\;\pm0.84 $} & 
		\makecell[c]{7.63$\;\pm0.05 $} & 
		\makecell[c]{9.83$\;\pm0.66 $} &
		\makecell[c]{43.45$\;\pm0.09 $} & 
		\makecell[c]{46.77$\;\pm1.14 $} \\	
		\hline
		\makecell[c]{Fed-ROD} & 
		\makecell[c]{87.93$\;\pm0.21 $} & 
		\makecell[c]{90.63$\;\pm1.12 $} &
		\makecell[c]{31.75$\;\pm0.41 $} & 
		\makecell[c]{31.47$\;\pm0.59 $} & 
		\makecell[c]{8.13$\;\pm0.46 $} & 
		\makecell[c]{12.34$\;\pm0.52 $} &
		\makecell[c]{46.04$\;\pm0.11 $} & 
		\makecell[c]{43.23$\;\pm1.17 $} \\	
		\hline
		\makecell[c]{\bf PAGE} & 
		\makecell[c]{\bf 92.67$\;\pm0.13 $} & 
		\makecell[c]{\bf 96.24$\;\pm0.33 $} &
		\makecell[c]{\bf 33.55$\;\pm0.14 $} & 
		\makecell[c]{\bf 40.94$\;\pm0.26 $} & 
		\makecell[c]{\bf 8.45$\;\pm0.17 $} & 
		\makecell[c]{\bf 12.83$\;\pm0.48 $} &
		\makecell[c]{\bf 51.74$\;\pm0.24 $} & 
		\makecell[c]{\bf 49.27$\;\pm0.55 $} \\ 
		\hline 
		\makecell[c]{\it Improvement} & 
		\makecell[c]{8.37} & 
		\makecell[c]{6.66} &
		\makecell[c]{34.85} & 
		\makecell[c]{30.09} & 
		\makecell[c]{\bf 35.20} & 
		\makecell[c]{\bf 39.91} &
		\makecell[c]{34.25} & 
		\makecell[c]{38.36} \\
		\hline 
		\hline 			
	\end{tabular}
\end{table*}

\textbf{Communication efficiency comparison:} To explore the communication efficiency of PAGE, we record the convergence round in Table \ref{tab-2}. As can be observed, PAGE achieves fewer rounds in most cases, reflecting a more rapid convergence rate and higher communication efficiency. Consequently, PAGE is more competitive in MLaaS, as expensive and rare communication bandwidths are saved in the presence of satisfying the demands of customers and service providers to the greatest extent.

\begin{table*}[htbp] 
	\caption{Convergence round of PAGE and baselines. Convergence round refers to the round that the global (averaging local) model accuracy stops increasing for TFL (PFL). The column of PAGE records the round at which the FSE achieves.} 
	\label{tab-2}
	\centering  
	\begin{threeparttable}  
		\small
		\begin{tabular}{m{1.75cm}<{\centering}||m{0.9cm}<{\centering}|m{0.9cm}<{\centering}|m{0.95cm}<{\centering}|m{1.35cm}<{\centering}|m{0.95cm}<{\centering}|m{0.95cm}<{\centering}|m{1.35cm}<{\centering}|m{0.95cm}<{\centering}|m{0.95cm}<{\centering}|m{0.95cm}<{\centering}|m{1.1cm}<{\centering}} 
			
			\hline  
			\hline
			\makecell[c]{Task} &
			\makecell[c]{\bf PAGE} & 
			\makecell[c]{FedAvg} & 
			\makecell[c]{FedProx} & 
			\makecell[c]{SCAFFOLD} & 
			\makecell[c]{FedDyn} & 
			\makecell[c]{Dap-FL} & 
			\makecell[c]{FedRECON} & 
			\makecell[c]{pFedMe} & 
			\makecell[c]{Ditto} & 
			\makecell[c]{FedALA} & 
			\makecell[c]{Fed-ROD} \\
			\hline
			\makecell[c]{Synthetic} &
			\makecell[c]{891} & 
			\makecell[c]{902} & 
			\makecell[c]{896} & 
			\makecell[c]{878} & 
			\makecell[c]{901} & 
			\makecell[c]{900} & 
			\makecell[c]{/} & 
			\makecell[c]{843} & 
			\makecell[c]{491} & 
			\makecell[c]{501} & 
			\makecell[c]{497} \\
			\hline
			\makecell[c]{Cifar-100} &
			\makecell[c]{499} & 
			\makecell[c]{510} & 
			\makecell[c]{497} & 
			\makecell[c]{540} & 
			\makecell[c]{502} & 
			\makecell[c]{337} & 
			\makecell[c]{641} & 
			\makecell[c]{550} & 
			\makecell[c]{313} & 
			\makecell[c]{506} & 
			\makecell[c]{401} \\
			\hline
			\makecell[c]{Tiny-ImageNet} &
			\makecell[c]{404} & 
			\makecell[c]{430} & 
			\makecell[c]{479} & 
			\makecell[c]{422} & 
			\makecell[c]{366} & 
			\makecell[c]{402} & 
			\makecell[c]{361} & 
			\makecell[c]{513} & 
			\makecell[c]{490} & 
			\makecell[c]{523} & 
			\makecell[c]{493} \\
			\hline
			\makecell[c]{Shakespeare} &
			\makecell[c]{602} & 
			\makecell[c]{552} & 
			\makecell[c]{655} & 
			\makecell[c]{546} & 
			\makecell[c]{607} & 
			\makecell[c]{590} & 
			\makecell[c]{657} & 
			\makecell[c]{642} & 
			\makecell[c]{498} & 
			\makecell[c]{646} & 
			\makecell[c]{556} \\		 
			\hline  
			\hline 		
		\end{tabular}		
	\end{threeparttable} 
\end{table*}

\textbf{Origin of performance gains:} In Figure \ref{fig-2}, we illustrate the accuracy curves of PAGE together with the reward curves of corresponding DDPG models. One can observe the same variation trends between global/local model accuracy and server/client-side reward curves. It suggests that the server-side DDPG model facilitates global model generalization by adjusting $ p_{i} $ to obtain larger rewards, and client-side DDPG models conduct local training hyper-parameter adjustment for expected rewards, benefiting local model personalization. In the same vein, the gains of convergence rates stem from the RL-based adjustment. Besides, global and local models collaboratively evolve into stable conditions, i.e., FSE, which validates the co-opetition intention of PAGE.

\begin{figure*}[htbp]
	\centering
	\includegraphics[scale=0.27]{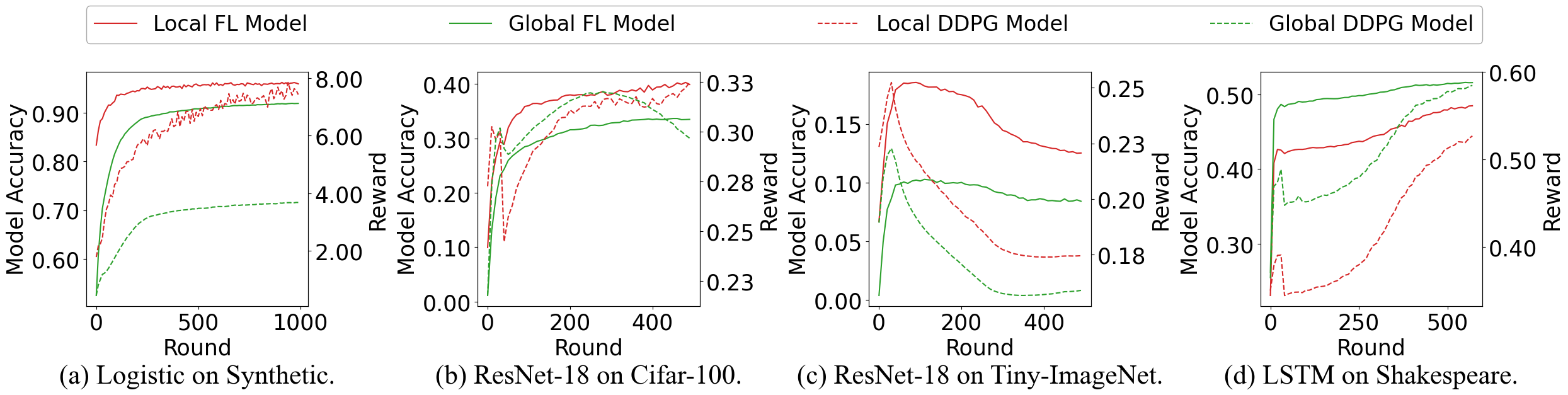}
	\caption{Model accuracy curves of PAGE together with corresponding DDPG reward curves. The left y-axes calibrate the model accuracy of FL models (solid curves), and the right y-axes calibrate the rewards of DDPG models (dotted curves).}
	\label{fig-2}
\end{figure*}

\begin{table*}[htbp]
	\caption{Comparison between PAGE and PFL under different data heterogeneity. Smaller $ \sigma $ reflects lower unbalance data distributions, and smaller $ \delta $ indicates heavier label skew.}
	\label{tab-3}
	\small
	\centering  
	\begin{tabular}{m{1.6cm}<{\centering}|m{0.9cm}<{\centering}|m{0.9cm}<{\centering}|m{0.9cm}<{\centering}|m{0.9cm}<{\centering}|m{0.9cm}<{\centering}|m{0.9cm}<{\centering}||m{0.9cm}<{\centering}|m{0.9cm}<{\centering}|m{0.9cm}<{\centering}|m{0.9cm}<{\centering}|m{0.9cm}<{\centering}|m{0.9cm}<{\centering}}   	
		\hline  
		\hline  
		\multicolumn{1}{c|}{\multirow{3}*{Algorithm}} &\multicolumn{6}{c||}{Quantity Skew -- acc (\%)} &
		\multicolumn{6}{c}{Label Skew -- acc (\%)} \\
		\cline{2-13}
		\multicolumn{1}{c|}{} & 
		\multicolumn{2}{c|}{$ \sigma=0.1 $} & 
		\multicolumn{2}{c|}{$ \sigma=0.3 $} & 
		\multicolumn{2}{c||}{$ \sigma=0.5 $} & 
		\multicolumn{2}{c|}{$ \delta=0.1 $} & 
		\multicolumn{2}{c|}{$ \delta=0.5 $} & 
		\multicolumn{2}{c}{$ \delta=1 $} \\	
		\cline{2-13} 
		&\makecell[c]{global} & 
		\makecell[c]{local} &  
		\makecell[c]{global} & 
		\makecell[c]{local} & 
		\makecell[c]{global} & 
		\makecell[c]{local} & 
		\makecell[c]{global} & 
		\makecell[c]{local} & 
		\makecell[c]{global} & 
		\makecell[c]{local} & 
		\makecell[c]{global} & 
		\makecell[c]{local} \\
		\hline
		\makecell[c]{\bf PAGE} & 
		\makecell[c]{\bf 33.47} & 
		\makecell[c]{\bf 40.96} &
		\makecell[c]{\bf 33.61} & 
		\makecell[c]{\bf 40.91} & 
		\makecell[c]{\bf 33.52} & 
		\makecell[c]{\bf 40.93} &
		\makecell[c]{\bf 32.69} & 
		\makecell[c]{\bf 54.58} & 
		\makecell[c]{\bf 33.57} & 
		\makecell[c]{\bf 40.97} &
		\makecell[c]{\bf 33.53} & 
		\makecell[c]{\bf 40.02} \\	
		\hline
		\makecell[c]{FedRECON} & 
		\makecell[c]{$ 24.13 $} & 
		\makecell[c]{$ 31.88 $} &
		\makecell[c]{$ 22.91 $} & 
		\makecell[c]{$ 32.91 $} & 
		\makecell[c]{$ 21.23 $} & 
		\makecell[c]{$ 34.95 $} &
		\makecell[c]{$ 23.77 $} & 
		\makecell[c]{$ 47.29 $} & 
		\makecell[c]{$ 25.24 $} & 
		\makecell[c]{$ 25.65 $} &
		\makecell[c]{$ 25.69 $} & 
		\makecell[c]{$ 20.13 $} \\	
		\hline
		\makecell[c]{pFedMe} & 
		\makecell[c]{$ 30.22 $} & 
		\makecell[c]{$ 38.92 $} &
		\makecell[c]{$ 30.18 $} & 
		\makecell[c]{$ 39.45 $} & 
		\makecell[c]{$ 30.09 $} & 
		\makecell[c]{$ 39.68 $} &
		\makecell[c]{$ 28.26 $} & 
		\makecell[c]{$ 47.23 $} & 
		\makecell[c]{$ 31.36 $} & 
		\makecell[c]{$ 33.28 $} &
		\makecell[c]{$ 31.47 $} & 
		\makecell[c]{$ 29.62 $} \\	
		\hline
		\makecell[c]{Ditto} & 
		\makecell[c]{$ 31.46 $} & 
		\makecell[c]{$ 39.93 $} &
		\makecell[c]{$ 30.96 $} & 
		\makecell[c]{$ 39.93 $} & 
		\makecell[c]{$ 30.31 $} & 
		\makecell[c]{$ 39.95 $} &
		\makecell[c]{$ 30.59 $} & 
		\makecell[c]{$ 53.50 $} & 
		\makecell[c]{$ 32.47 $} & 
		\makecell[c]{$ 36.46 $} &
		\makecell[c]{$ 32.79 $} & 
		\makecell[c]{$ 33.24 $} \\	
		\hline
		\makecell[c]{FedALA} & 
		\makecell[c]{$ 31.55 $} & 
		\makecell[c]{$ 39.63 $} &
		\makecell[c]{$ 31.46 $} & 
		\makecell[c]{$ 39.64 $} & 
		\makecell[c]{$ 31.08 $} & 
		\makecell[c]{$ 39.64 $} &
		\makecell[c]{$ 30.46 $} & 
		\makecell[c]{53.58} & 
		\makecell[c]{$ 32.27 $} & 
		\makecell[c]{34.98} &
		\makecell[c]{$ 32.84 $} & 
		\makecell[c]{30.24} \\	
		\hline
		\makecell[c]{Fed-ROD} & 
		\makecell[c]{$ 31.71 $} & 
		\makecell[c]{$ 32.19 $} &
		\makecell[c]{$ 31.39 $} & 
		\makecell[c]{$ 32.31 $} & 
		\makecell[c]{$ 31.06 $} & 
		\makecell[c]{$ 32.88 $} &
		\makecell[c]{$ 30.53 $} & 
		\makecell[c]{$ 49.44 $} & 
		\makecell[c]{$ 32.35 $} & 
		\makecell[c]{$ 28.18 $} &
		\makecell[c]{$ 32.16 $} & 
		\makecell[c]{$ 22.98 $} \\	
		\hline  
		\hline 			
	\end{tabular}
\end{table*}

\textbf{Performance under quantity-skewed heterogeneity:} To test the performance of PAGE facing quantity-skewed data heterogeneity, we conduct unbalanced data partitions on top of the default setting for {\it ResNet-18 on Cifar-100}, where the ratio of clients' local sample numbers follows logarithmic normal distributions\footnote{A commonly used distribution to calibrate the data quantity \cite{Zeng2023}.} with the mean of $ 0 $ and the standard deviation $ \sigma\!=\! 0.1,0.3,\text{and}\;0.5 $. In this case, we compare PAGE with PFL in the left part of Table \ref{tab-3}. As expected, the global model accuracy are higher than all PFL baselines, while keeping relatively desirable local model personalization. In particular, the global model generalization of PAGE remains stable with the increasing unbalance degree, while PFL becomes worse. Such a property is attributed to the adaptive adjustment of $ p_{i} $.

\textbf{Performance under label-skewed heterogeneity:} We then study the effectiveness of PAGE facing label-skewed data heterogeneity for {\it ResNet-18 on Cifar-100}. The right side of Table \ref{tab-3} illustrates the comparison between PAGE and PFL baselines when adjusting $ \delta $ as $ 0.1 $, $ 0.5 $, and $ 1 $ in the default setting. With the label-skewed degree increasing, PFL manifests better local model personalization, but fares less well in global model generalization, which is a somewhat disappointing property in MLaaS. Conversely, PAGE consistently exhibits outstanding personalization, while maintaining generalization. The adjustment of $ \eta_{i} $ and $ \alpha_{i} $ accounts in part for the stable performance.

\textbf{Ablation of hyper-parameter tuning completeness:} To understand how the game-based relation contributes to generalization and personalization, we conduct ablation analyses for {\it ResNet-18 on Cifar-100} by adjusting one or two factors in PAGE, while other factors remain constant. In Figure \ref{fig-3}, only adjusting $ p_{i} $ benefits the global model performance, while adjusting $ \eta_{i} $ or $ \alpha_{i} $ promotes the local model performance. By contrast, simultaneously adjusting $ \eta_{i} $ and $ \alpha_{i} $ achieves higher local model accuracy and more rapid convergence rates than solely adjusting one factor. In addition, compared to the equilibrium in the setting of remaining $ \eta_{i} $ or $ \alpha_{i} $ constant, PAGE's equilibrium has better generalization and personalization. Thus, the completeness of balance-controlling factors is confirmed.

\begin{figure}[htbp]
	\centering
	\includegraphics[scale=0.25]{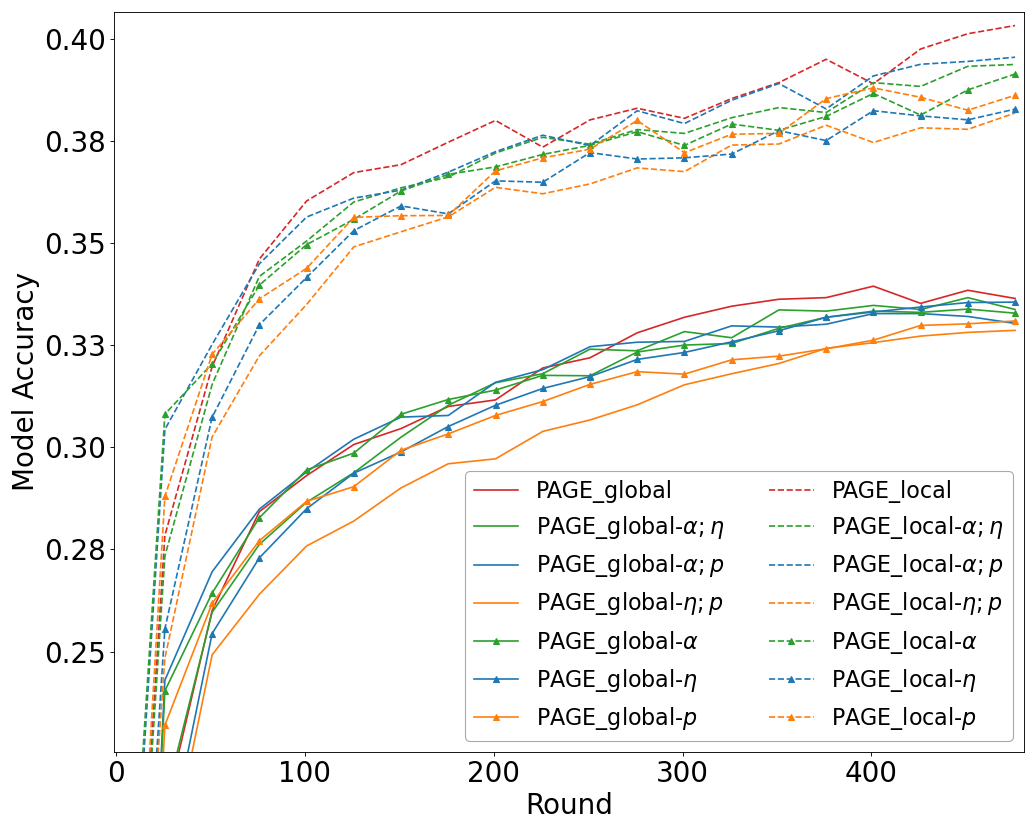}
	\caption{Completeness of balance-controlling factors.}
	\label{fig-3}
\end{figure}

\textbf{Ablation of client amount:} The top part of Table \ref{tab-4} explores the performance with different client amounts for {\it Logistic on Synthetic}. Seemingly, the balance between global and local models would not change with the client amount increasing, but requires more rounds. But we mention that the increasing round with the increasing client amount widely exists in diverse FL methods rather than merely in PAGE. The reason behind this attribute is that more participants would expand the feature space of local data, which exacerbates the difficulty of achieving the equilibrium (convergence in TFL/PFL). This provides an instructive insight into product FL with enormous clients, i.e., the practical implementation of PAGE at scale.

\begin{table}[htbp]
	\centering 
	\caption{Exploration of other properties. The last column refers to the round achieving equilibrium. $ 50 $ and $ 1000 $ indicate default settings with distinct client amounts, and $ 10\!:\!1 $ and $ 1\!:\!10 $ are the ratios between global and local rewards.}
	\label{tab-4}  
	\begin{threeparttable}  
		\small
		\begin{tabular}{m{1.6cm}<{\centering}||m{1.85cm}<{\centering}|m{1.85cm}<{\centering}|m{1.2cm}<{\centering}} 
			
			\hline  
			\hline  
			\makecell[c]{Task} & 
			\makecell[c]{global acc (\%)} & 
			\makecell[c]{local acc (\%)} & 
			\makecell[c]{Round} \\	 
			\hline  
			\multicolumn{4}{c}{\quad Client amount (Synthetic)} \\
			\hline
			\makecell[c]{PAGE-50} & 
			\makecell[c]{91.75} & 
			\makecell[c]{96.62} & 
			\makecell[c]{618} \\		 
			\hline
			\makecell[c]{PAGE-1000} & 
			\makecell[c]{92.39} & 
			\makecell[c]{96.43} & 
			\makecell[c]{928} \\
			\hline
			\makecell[c]{\bf PAGE (100)} & 
			\makecell[c]{92.67} & 
			\makecell[c]{96.24} & 
			\makecell[c]{891} \\
			\hline
			\multicolumn{4}{c}{\quad Generalization or personalization trend (Cifar-100)} \\
			\hline
			\makecell[c]{PAGE-10:1} & 
			\makecell[c]{35.11} & 
			\makecell[c]{40.19} & 
			\makecell[c]{538} \\
			\hline
			\makecell[c]{PAGE-1:10} & 
			\makecell[c]{31.91} & 
			\makecell[c]{41.55} & 
			\makecell[c]{493} \\	
			\hline
			\makecell[c]{\bf PAGE} & 
			\makecell[c]{33.55} & 
			\makecell[c]{40.94} & 
			\makecell[c]{499} \\		 
			\hline    
			\hline 		
		\end{tabular}
	\end{threeparttable} 
\end{table}

\textbf{Bias between generalization and personalization:} Also, we discuss the biased variant of PAGE for {\it ResNet-18 on Cifar-100}, where the reward ratio\footnote{The ratios are empirical settings in our simulation, which, for reproducibility, could be adjusted with the changes in the bias degree, client amount, task, etc.} between the server-side DDPG and the client-side DDPG varies to simulate the varying biases between generalization and personalization in practice. As shown in the bottom part of Table \ref{tab-4}, PAGE could tip the balance to an expected side by changing the reward ratio according to the market demand in MLaaS. Particularly, by comparing the results with baselines in Table \ref{tab-1}, the biased variants of PAGE outperform all TFL/PFL baselines in terms of corresponding global/local model accuracy and convergence rates.

\textbf{Computation efficiency:} Besides, we record the computation performance of the main operations of PAGE in Table \ref{tab-5A}, where the DDPG model training efficiency is higher than FL models by an order of magnitude. Also, the model size of the DDPG model is significantly smaller than FL models in practice, such as prevailing large language models. It suggests that PAGE is efficient in terms of computation, as the DDPG model training could be accomplished rapidly during the entire collaborative training process. Besides, DDPG can be implemented on CPU rather than rarer GPU resources, which highlights the technical feasibility of PAGE.

\begin{table}[htbp]
	\centering
	\caption{Computation performance of main operations.}
	\label{tab-5A}
	\small   
	\begin{threeparttable}  
		\begin{tabular}{m{0.8cm}<{\centering}|m{3.5cm}<{\centering}|m{2.5cm}<{\centering}} 
			
			\hline  
			\hline  
			\makecell[c]{Index} & \makecell[c]{Operation} & \makecell[c]{Time (ms/Byte)} \\ 
			\hline 
			\makecell[c]{1} & \makecell[c]{Local training} & \makecell[c]{ $1.45\!\times\!10^{-4}$}\\ 
			\hline 
			\makecell[c]{2} & \makecell[c]{Model aggregation} & \makecell[c]{$1.93\!\times\!10^{-6}$}\\  
			\hline  
			\makecell[c]{3} & \makecell[c]{Local DDPG training} & \makecell[c]{$6.84\!\times\!10^{-5}$}\\  
			\hline  
			\makecell[c]{4} & \makecell[c]{Global DDPG training} & \makecell[c]{$7.38\!\times\!10^{-5}$}\\  
			\hline  
			\hline 			
		\end{tabular}
	\end{threeparttable}  
\end{table}

\section{Conclusion and Future Work}

PAGE is the first FL algorithm that balances the local model personalization and global model generalization. A key insight into developing PAGE is that an iterative co-opetition exists between the server and clients, which runs parallel with a feedback multi-stage MLSF Stackelberg game. Particularly, the server/client-level sub-games and MDPs have uncanny resemblances. As such, PAGE introduces DDPG to solve the equilibrium of the formulated game, thereby providing a stable terminating condition for FL, i.e., the balance between personalization and generalization.

As a future work, we will take the security and privacy issues into account. In addition, by jointly considering resource heterogeneity, a variant of PAGE could be implemented in a more practical scenario, which is already investigated in Appendix D theoretically. We leave the empirical validation in the future.



\bibliographystyle{ACM-Reference-Format}
\bibliography{sample-base}


\end{document}